\DeclareTextCommand{\nobreakspace}{T1}{\leavevmode\nobreak\ } %
\theoremstyle{definition} \newtheorem{defn}{Definition}
\theoremstyle{plain} \newtheorem{prop}[defn]{Proposition}
\theoremstyle{plain} 
\theoremstyle{plain} 
\theoremstyle{plain} 
\theoremstyle{remark} 
\theoremstyle{remark}
\newcommand{\term}[1]{\textcolor{BlueViolet}{\textit{{#1}}}}
\newcommand*{\defeq}{\mathrel{\vcenter{\baselineskip0.5ex \lineskiplimit0pt     
                     \hbox{\scriptsize.}\hbox{\scriptsize.}}}=}
\newcommand{\overbar}[1]{\mkern 1.5mu\overline{\mkern-1.5mu#1\mkern-1.5mu}\mkern 1.5mu}
\DeclareMathOperator*{\argmin}{arg\,min}
\DeclareMathOperator*{\argmax}{arg\,max}
\newcommand{\Abs}[1]{\lVert{#1}\rVert} 
\newcommand{\abs}[1]{\lvert{#1}\rvert} 
\DeclareMathOperator{\CBE}{CBE} 
\DeclareMathOperator{\COCE}{COCE} 
\def\colonset{:\,} 
\def\cond{\,\vert\,} 
\def\ddist{\mu} 
\def\exx{\mathbf{E}} 
\DeclareMathOperator{\GBL}{GBL} 
\def\HH{\mathcal{H}} 
\def\II{\mathcal{I}} 
\newcommand{\prr}[1]{\mathbf{P}{#1}} 
\newcommand{\rdv}[1]{\mathsf{#1}} 
\def\RR{\mathbb{R}} 
\DeclareMathOperator{\sign}{sign} 
\def\Scal{\mathcal{S}} 
\DeclareMathOperator{\SDRO}{SDRO} 
\def\vaa{\mathbf{V}} 
\def\XX{\mathcal{X}} 
\def\YY{\mathcal{Y}} 
\def\ZZ{\mathcal{Z}} 
\begin{document}

\title{\textbf{Making Robust Generalizers Less Rigid\\with Loss Concentration}}

\author{
  Matthew J.~Holland\\
  Kansai University
  \and
  Toma Hamada\\
  Osaka University
}
\date{} 

\maketitle

\begin{abstract}
While the traditional formulation of machine learning tasks is in terms of performance on average, in practice we are often interested in how well a trained model performs on rare or difficult data points at test time. To achieve more robust and balanced generalization, methods applying sharpness-aware minimization to a subset of worst-case examples have proven successful for image classification tasks, but only using overparameterized neural networks under which the relative difference between ``easy'' and ``hard'' data points becomes negligible. In this work, we show how such a strategy can dramatically break down under simpler models where the difficulty gap becomes more extreme. As a more flexible alternative, instead of typical sharpness, we propose and evaluate a training criterion which penalizes poor loss concentration, which can be easily combined with loss transformations such exponential tilting, conditional value-at-risk (CVaR), or distributionally robust optimization (DRO) that control tail emphasis.
\end{abstract}

\tableofcontents

\section{Introduction}\label{sec:intro}

Towards the ultimate goal of designing reliable, transparent, and trustworthy machine learning systems, the term ``robustness'' is used frequently to describe a wide spectrum of properties that one might expect a learning algorithm or trained model to satisfy. At the core of most of the literature related to robust learning is the following question:
\begin{align*}\label{eqn:RQ}
\text{\textit{``How should rare or difficult data points be treated during training?''}}\tag{RQ}
\end{align*}
While ``difficult'' is a broad term, we use it to describe data on which a given model tends to incur relatively large losses, which cannot be easily improved without sacrificing performance on other data. The traditional answer to the question (\ref{eqn:RQ}) is that the influence of such points on the learning procedure should be reduced, if not removed entirely. This viewpoint treats such points as anomalous outliers that are not relevant to the task at hand. It has roots in classical robust statistics, but is apparent in an active line of work over the past decade on machine learning under heavy-tailed distributions.\footnote{A classic textbook reference is \citet{huber2009a}. See papers by \citet{hsu2014a,holland2017a,holland2019c,nazin2019a,cutkosky2021aNeurIPS} for some representative examples.}

The alternative answer to (\ref{eqn:RQ}) is that rare or difficult data points should be \emph{emphasized}, rather than suppressed, often in pursuit of robustness to adversarial examples, decisions that are more ``fair'' with respect to under-represented groups, or stronger guarantees under worst-case scenarios.\footnote{See for example \citet{hashimoto2018a}, \citet{williamson2019a}, and \citet{duchi2021a}.} One inherent challenge that arises with this latter viewpoint is the problem of how much emphasis to place on the difficult points. When these points are rare, emphasizing them usually means suppressing or ignoring the majority of the training data, which has a serious impact on generalization to test data. This key motivation has led to a fruitful line of work in the past few years aimed at achieving a better balance between performance on average-case and worst-case data.\footnote{See \citet{zhai2021a} and \citet{robey2022a}, plus the references therein.}

Refining our focus to the task of image classification, one particularly salient example from this line of research is the ``SharpDRO'' algorithm proposed by \citet{huang2023a}. In principle, SharpDRO seeks to minimize sharpness in the sense of \citet{foret2021a}, but instead of computing sharpness for the usual expected loss, SharpDRO conditions on the degree of image data corruption, and targets the worst-case conditional expected loss. In practice, the authors diverge somewhat from this objective, effectively minimizing a sum of the expected loss and a ``softened'' sharpness term reflecting all corruption levels. This practical variant of SharpDRO shows state-of-the-art test accuracy over images spanning a wide range of corruption types, albeit with the obvious drawback that corruption levels must be known in advance. In addition, all evaluations of SharpDRO in the existing literature are within the very narrow scenario where the classifier is an overparameterized neural network that can perfectly classify all points, regardless of the degree of data corruption. This means that with sufficient training, the gap between ``easy'' and ``hard'' data points becomes essentially negligible. On the other hand, under simpler, less expressive models, one expects the risk of over-fitting to grow, since the performance on difficult points gets relatively worse, and the loss distribution becomes more heavy-tailed, all else equal. How do so-called ``robust methods'' hold up when the distribution of relative difficulty changes?

With the aforementioned limitations as context, in this work we show how the \emph{balanced accuracy} of SharpDRO degrades sharply under less expressive models. As an alternative method, we propose combining a worst-case emphasizing mechanism (transforming individual losses) with a loss aggregator that explicitly penalizes poor concentration of the transformed losses. Making the loss transformation lets us target difficult points without prior knowledge of data corruption levels, and the aggregation technique works as a low-cost alternative to directly minimizing sharpness. In particular, from our proposed objective function, we can derive a soft ascent-descent update under gradient-based algorithms, with explicit links to sharpness in terms of both learnable parameters \emph{and} data.

We formulate our problem of interest, discuss related literature, give a detailed introduction to the key benchmark method (SharpDRO) in \S{\ref{sec:background}}, and discuss its limitations. We then introduce our method called ``concentrated OCE'' (COCE) and show explicit theoretical links to sharpness-aware learning in \S{\ref{sec:coce}}. In \S{\ref{sec:empirical}} we describe our empirical test design and share key findings. Our first main finding is that as hypothesized, under a simple CNN model with far fewer parameters than the models tested by \citet{huang2023a}, the balanced accuracy obtained by SharpDRO shows a drastic drop, whereas our proposed COCE method significantly outperforms all other benchmarks under the same setup. An additional finding is that for smaller models, our proposed method tends to outperform even SAM \citep{foret2021a} in terms of (non-balanced) test accuracy under settings with and without distribution shift, offering as-good or better performance with half the gradient computation cost. One limitation to our method is that the ideal per-loss transformation tends to differ depending on the model scale; while empirical trends do suggest an easily implemented strategy, a more principled, theory-driven approach is left as future work.

\section{Background}\label{sec:background}

\subsection{Problem formulation}\label{sec:formulation_concise}

Let us start by formulating the ultimate learning problem of interest with high generality. Let $\ZZ$ denote a generic data space. We write $\rdv{Z}$ for a random data point at test time, taking values in $\ZZ$, with distribution $\ddist^{\ast}$. Let $\HH$ denote a set of feasible decisions, and let $\ell: \HH \times \ZZ \to \RR$ be a loss function which assigns a real value $\ell(h;z)$ to (decision, data) pairs. In addition to the data, we have an ancillary \term{random state} variable $\rdv{S}$ associated with $\rdv{Z}$, taking values in a finite set $\Scal$. The ultimate objective we consider is minimization of the \term{(generalized) balanced loss} $\GBL(\cdot)$ on $\HH$, defined by
\begin{align}\label{eqn:GBL_defn}
\GBL(h) \defeq \frac{1}{\abs{\Scal}} \sum_{s \in \Scal} \exx\left[\ell(h;\rdv{Z}) \cond \rdv{S} = s\right].
\end{align}
As an alternative to the traditional expected loss $\exx[\ell(h;\rdv{Z})]$, the objective in (\ref{eqn:GBL_defn}) asks that regardless of the actual distribution of state $\rdv{S}$, data associated with each state $s \in \Scal$ should be given equal weight when evaluating $h \in \HH$. Since this is a learning problem, $\rdv{Z}$ and $\rdv{S}$ are not observable at training time; the learning algorithm is given access to $n$ training points $\rdv{Z}_{1},\ldots,\rdv{Z}_{n}$, assumed to be independent and identically distributed (IID) according to $\ddist$. Traditionally one assumes $\ddist = \ddist^{\ast}$, but allowing $\ddist \neq \ddist^{\ast}$ lets us capture settings with ``distribution drift.'' As with the test data, each training point $\rdv{Z}_{i}$ is associated with a random state $\rdv{S}_{i}$, for $i \in [n]$. When the learning algorithm has access to $n$ independent (data, state) pairs $(\rdv{Z}_{1},\rdv{S}_{1}), \ldots, (\rdv{Z}_{n},\rdv{S}_{n})$, we say it is \term{state-aware}; when the learner only has access to the data $\rdv{Z}_{1},\ldots,\rdv{Z}_{n}$, we say it is \term{state-agnostic}.

Leaving the notion of ``random state'' general allows one to capture many different settings as special cases (discussed in appendix \S{\ref{sec:biblio_notes}}), but to keep our exposition concrete, here we focus on the supervised learning task of classification, and align our formulation with the problem setting considered in the original SharpDRO paper by \citet{huang2023a}. More precisely, the data space $\ZZ = \XX \times \YY$ is composed of an input space $\XX$ and an output space $\YY$, with test data $\rdv{Z} = (\rdv{X},\rdv{Y})$ taking values on $\XX \times \YY$. Each decision $h \in \HH$ is assumed to map an input $x \in \XX$ to a ``label'' $h(x) \in \YY$. The main loss function of interest for evaluation is the zero-one loss, denoted for each $(x,y) \in \XX \times \YY$ and $h \in \HH$ by
\begin{align}
\ell_{\textup{01}}(h;x,y) \defeq 
\begin{cases}
0, & \text{if } h(x) = y\\
1, & \text{else}.
\end{cases}
\end{align}
Plugging $\ell = \ell_{\textup{01}}$ into our balanced loss definition (\ref{eqn:GBL_defn}), we obtain
\begin{align}\label{eqn:GBL_zeroone}
\GBL(h) = \frac{1}{\abs{\Scal}} \sum_{s \in \Scal} \prr{\left\{ h(\rdv{X}) \neq \rdv{Y} \cond \rdv{S} = s\right\}},
\end{align}
and we refer to $1-\GBL(h)$ with $\ell = \ell_{\textup{01}}$ as the \term{balanced accuracy} with respect to the state $\rdv{S}$. In order to align our setup with that of \citet{huang2023a}, here we assume that $\rdv{S}$ represents a non-negative integer \term{corruption severity level}, where $\rdv{X}^{\prime}$ is a ``clean'' example, and the actual data takes the form $\rdv{X} = f(\rdv{X}^{\prime}; \rdv{S})$, where $f(\cdot;s)$ denotes a corruption transformation at level $s \in \Scal = \{0,1,2,\ldots\}$, and $f(x^{\prime};0) = x^{\prime}$ for all $x^{\prime} \in \XX$. See Figure \ref{fig:corrupted_image_examples} for some examples.

\begin{figure}[t!]
\centering
\includegraphics[width=0.33\textwidth]{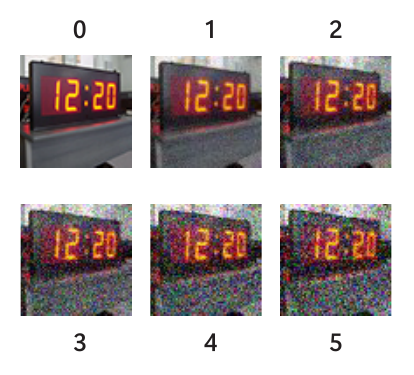}\,\includegraphics[width=0.33\textwidth]{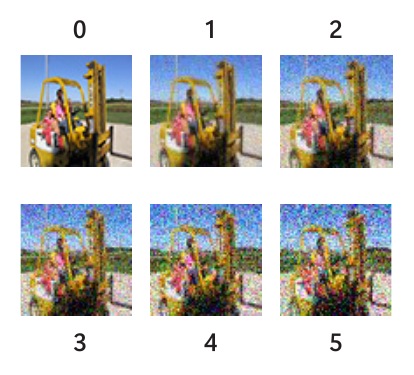}\,\includegraphics[width=0.33\textwidth]{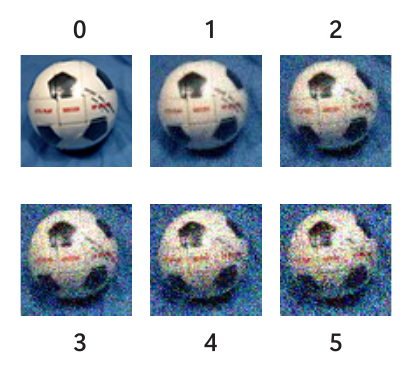}
\caption{Examples of image data at six different corruption severity levels under a Gaussian noise corruption mechanism; note that $\rdv{S} = 0$ amounts to a clean image.}
\label{fig:corrupted_image_examples}
\end{figure}

\subsection{SharpDRO}\label{sec:sdro}
The core \emph{conceptual} proposal of \citet{huang2023a} is a state-aware procedure which attempts to directly minimize the worst-case sharpness
\begin{align}\label{eqn:sdro_conceptual}
\SDRO(h) \defeq \sup_{\Abs{h^{\prime}} \leq \varepsilon} \exx\left[ \ell(h+h^{\prime};\rdv{X},\rdv{Y}) \cond \rdv{S} = s^{\ast}\right]
\end{align}
as a function of $h \in \HH$, where the ``worst state'' $s^{\ast}$ is defined by the property
\begin{align}\label{eqn:sdro_worst_case}
s^{\ast} \in \argmax_{s \in \Scal} \exx\left[ \ell(h;\rdv{X},\rdv{Y}) \cond \rdv{S} = s\right].
\end{align}
Note that $\ell$ here refers to a training loss function, typically a smooth convex surrogate to the zero-one loss $\ell_{\textup{01}}$ mentioned in \S{\ref{sec:formulation_concise}}. We call this their ``conceptual'' proposal because it is the basic motivation underlying the actual procedure that they run.\footnote{For justification of our calling (\ref{eqn:sdro_conceptual})--(\ref{eqn:sdro_worst_case}) the conceptual proposal of \citet{huang2023a}, see \S{\ref{sec:sdro_more_details}}.} In practice, the procedure that they actually implement diverges from this original objective. Here we give a concise overview of what they have actually implemented, based on public source code.\footnote{We are referring to their GitHub repository: \url{https://github.com/zhuohuangai/SharpDRO/}. See \S{\ref{sec:sdro_implementation}} in the appendix for an exposition that cites more specific references in their code.} Assuming we have $n$ training data points $(\rdv{X}_{1},\rdv{Y}_{1}),\ldots,(\rdv{X}_{n},\rdv{Y}_{n})$ and using the full batch for simplicity, given candidate $h_{t}$ at time step $t$, the update to obtain $h_{t+1}$ is as follows:
\begin{align}\label{eqn:sdro_update}
h_{t+1} \defeq h_{t} - \alpha_{t}\left( \widehat{\rdv{G}}_{t} + \widetilde{\rdv{G}}_{t} \right)
\end{align}
where $\alpha_{t}$ is a non-negative step size, the first summand in the update direction vector, namely 
\begin{align}\label{eqn:sdro_first_grad}
\widehat{\rdv{G}}_{t} \defeq \frac{1}{n}\sum_{i=1}^{n}\nabla\ell(h_{t};\rdv{X}_{i},\rdv{Y}_{i})
\end{align}
is simply the gradient of the empirical expected loss on the data batch, evaluated at $h_{t}$. The second term $\widetilde{\rdv{G}}_{t}$ is a bit more involved. Most of the procedure they implement to compute $\widetilde{\rdv{G}}_{t}$ is based on sharpness-aware minimization (SAM) of \citet{foret2021a}, with a few key state-aware modifications, which we now describe. For each $s \in \Scal$, let $\II_{s} \subset [n]$ denote the index of training data in state $s$, i.e., $\II_{s} \defeq \{ i \in [n] \colonset \rdv{S}_{i} = s \}$. With this index notation in place, write the per-state average losses as
\begin{align}\label{eqn:sdro_aveloss_per_state}
\widehat{\rdv{L}}_{s}(h) \defeq \frac{1}{\abs{\II_{s}}} \sum_{i \in \II_{s}} \ell(h;\rdv{X}_{i},\rdv{Y}_{i})
\end{align}
for each $s \in \Scal$. With probabilities initialized at $p_{0,s} \defeq 1 / \abs{\Scal}$ for each $s \in \Scal$, the average losses in (\ref{eqn:sdro_aveloss_per_state}) at step $t$ are used to update a vector of probabilities as
\begin{align}\label{eqn:sdro_adv_probs}
p_{t,s} \defeq \frac{p_{t-1,s}\exp(\beta_{t}\widehat{\rdv{L}}_{s}(h_{t}))}{\sum_{s^{\prime} \in \Scal}p_{t-1,s^{\prime}}\exp(\beta_{t}\widehat{\rdv{L}}_{s^{\prime}}(h_{t}))}
\end{align}
with $\beta_{t} \geq 0$ being another free parameter along with $\alpha_{t}$. With these probabilities now \emph{fixed}, the vector $\widetilde{\rdv{G}}_{t}$ of interest is computed as
\begin{align}\label{eqn:sdro_second_grad}
\widetilde{\rdv{G}}_{t} \defeq \sum_{s \in \Scal} \frac{p_{t,s}}{\abs{\II_{s}}} \sum_{i \in \II_{s}} \nabla\ell(h_{t}+\varepsilon\widehat{\rdv{G}}_{t}/\Abs{\widehat{\rdv{G}}_{t}};\rdv{X}_{i},\rdv{Y}_{i})
\end{align}
with the perturbation vector based on $\widehat{\rdv{G}}_{t}$ given in (\ref{eqn:sdro_first_grad}) and the radius $\varepsilon \geq 0$ given in the original objective (\ref{eqn:sdro_conceptual}).\footnote{Regarding what is being differentiated by autograd, in the SharpDRO GitHub repository file \texttt{train.py}, it is critical to note that the authors pass \texttt{first\_loss.data} to the \texttt{compute\_sharp\_loss()} function, plus use \texttt{adjusted\_loss.data} when computing $p_{t,s}$. As a result, gradients are not computed for these probabilities, which can be seen as a function of neural network weight parameters.} Taking (\ref{eqn:sdro_update}) together with (\ref{eqn:sdro_first_grad}) and (\ref{eqn:sdro_second_grad}), we have described the SharpDRO procedure implemented in public code; the state-agnostic alternative has not been made public. Empirical performance of this practical implementation is competitive, though there is clearly some divergence from their conceptual proposal (\ref{eqn:sdro_conceptual}) mentioned earlier.

\subsection{Difficulties in the state-agnostic setting}\label{sec:issues}

When we are lucky enough to be state-aware, i.e., able to observe $\rdv{S}_{1},\ldots,\rdv{S}_{n}$, it is easy to compute per-state average losses and accuracy to design a training objective and validation procedure aligned with the balanced objectives given in (\ref{eqn:GBL_defn}) and (\ref{eqn:GBL_zeroone}) earlier. The second gradient $\widetilde{\rdv{G}}_{t}$ used by SharpDRO, given in (\ref{eqn:sdro_second_grad}), is a clear example of this, with the per-state probabilities $\{ p_{t,s} \colonset s \in \Scal \}$ designed to put more emphasis on ``the worst-performing states.'' However, when states cannot be observed, the problem of trying to achieve a small balanced error becomes much more difficult. One can consider trying to find a state-agnostic upper bound $\overbar{\GBL}(h)$ such that 
\begin{align}\label{eqn:state_agnostic_bound}
	\GBL(h) \leq \max_{s \in \Scal} \exx\left[\ell(h;\rdv{Z}) \cond \rdv{S} = s\right] \leq \overbar{\GBL}(h)
\end{align}
holds for $h \in \HH$ (e.g., as in \citet{hashimoto2018a}), but the efficacy of this strategy depends strongly on the distribution of the conditional expected loss $\exx\left[\ell(h;\rdv{Z}) \cond \rdv{S} \right]$ over the draw of $\rdv{S}$. If the ``upper tail'' over the random draw of $\rdv{S}$ is too heavy, the ``max-balance gap''
\begin{align}\label{eqn:max_balance_gap}
\max_{s \in \Scal} \exx\left[\ell(h;\rdv{Z}) \cond \rdv{S} = s\right] - \GBL(h) \geq 0
\end{align}
will grow large, causing the objective $\overbar{\GBL}(\cdot)$ to diverge from $\GBL(\cdot)$, and it is well-documented how ignoring too many good-performing examples can be disastrous for off-sample generalization \citep{zhai2021a}. It should also be noted that this max-balance gap can be problematic not just for state-agnostic approaches, but also state-aware methods which target the worst-case expected loss, such as GroupDRO \citep{sagawa2020a} and SharpDRO. With these points in mind, our approach to be described in \S{\ref{sec:coce}} considers how we can ensure the distribution of $\exx\left[\ell(h;\rdv{Z}) \cond \rdv{S}\right]$ (over the random draw of $\rdv{S}$) is not too widely dispersed, without needing to observe $\rdv{S}$.

\section{Concentrated OCE}\label{sec:coce}

Recalling that our ultimate goal as described in \S{\ref{sec:formulation_concise}} is to achieve a small balanced loss, note that when the state-aware expected loss $\exx\left[\ell(h;\rdv{Z}) \cond \rdv{S}\right]$ has a large dispersion, e.g., a large variance over the random draw of $\rdv{S}$, there must be some states $s$ such that the state-conditional expected loss $\exx\left[\ell(h;\rdv{Z}) \cond \rdv{S}=s\right]$ is large, preventing us from achieving a small balanced loss. Since we are assuming a state-agnostic setup, we cannot directly target the distribution of the state-aware expected loss $\exx\left[\ell(h;\rdv{Z}) \cond \rdv{S}\right]$, but instead can only work with the state-agnostic loss $\ell(h;\rdv{Z})$. Note that if we design an objective function which encourages the distribution of $\ell(h;\rdv{Z})$ to be \emph{well-concentrated} near a fixed threshold, this is sufficient, albeit not necessary, to imply the concentration of $\exx\left[\ell(h;\rdv{Z}) \cond \rdv{S}\right]$.\footnote{To see that it is not a necessary condition using an extreme example, losses could be severely heavy-tailed, but if $\ell(h;\rdv{Z})$ is independent of $\rdv{S}$, the balanced loss collapses into the expected loss, i.e., $\GBL(h) = \exx[\ell(h;\rdv{Z})]$, and $\exx\left[\ell(h;\rdv{Z}) \cond \rdv{S}\right]$ has zero variance over the draw of $\rdv{S}$. Recall also that the marginal variance decomposes as $\vaa\left[\ell(h;\rdv{Z})\right] = \exx\left[ \vaa\left[\ell(h;\rdv{Z}) \cond \rdv{S}\right] \right] + \vaa\left[\exx\left[ \ell(h;\rdv{Z}) \cond \rdv{S} \right]\right]$, telling us that if the left-hand side is small, the second term on the right-hand side cannot be large.} On the other hand, when dispersion is measured by average deviations from a threshold, for states which have very few data points, their impact becomes negligible, and some mechanism to emphasize the right-hand tails (worse-performing examples) is necessary. As we describe in \S{\ref{sec:coce_intro}} below, we propose a combination of these two mechanisms.

\subsection{Our approach}\label{sec:coce_intro}

\begin{figure}[t!]
\centering
\includegraphics[width=0.5\textwidth]{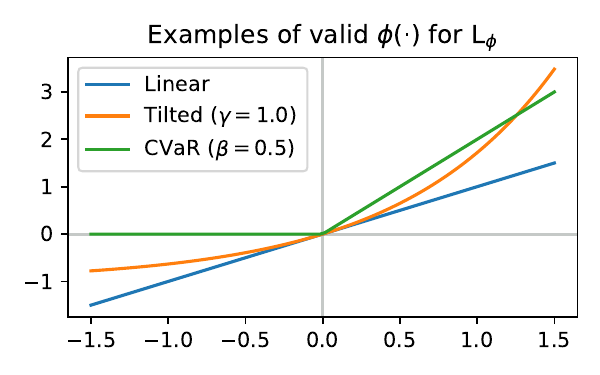}\includegraphics[width=0.5\textwidth]{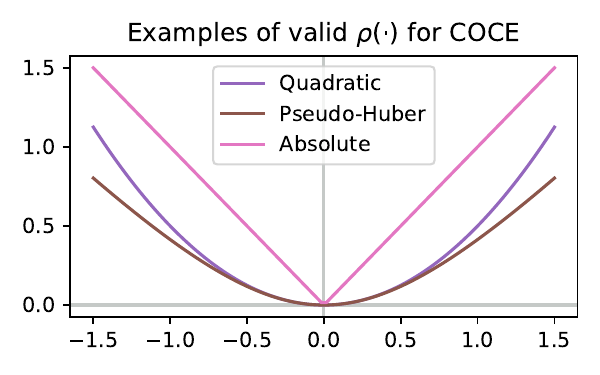}
\caption{Concrete examples of the functions $\phi$ and $\rho$ used in (\ref{eqn:oce_transform}) and (\ref{eqn:concentrated_OCE}) respectively.}
\label{fig:phi_rho_examples}
\end{figure}

As described in the preceding paragraph, our objective function design involves two key steps.
\begin{enumerate}
\item Transform individual losses in a risk-aversive fashion which emphasizes rare but poorly-performing data points.
\item Penalize models under which the transformed losses are not well-concentrated around a sufficiently small threshold.
\end{enumerate}
As a general-purpose strategy, instead of using $\exx[\ell(h;\rdv{Z})]$ (as in ERM) or $\SDRO(h)$ (as in (\ref{eqn:sdro_conceptual}) for SharpDRO), we propose using the following objective function:
\begin{align}\label{eqn:concentrated_OCE}
\COCE(h) \defeq \exx\left[\rho\left(\rdv{L}_{\phi}(h) - \eta\right)\right].
\end{align}
Let us unpack this formulation piece by piece. $\COCE$ is an abbreviation of \term{concentrated OCE}, and OCE stands for \term{optimized certainty equivalent}. Expectation $\exx[\cdot]$ is taken with respect to random data $\rdv{Z}$. We use $\rdv{L}_{\phi}(h)$ to denote the transformed random loss incurred by $h$, and $\rho$ is used to measure deviations around the fixed threshold $\eta$. We describe these two mechanisms in further detail below.

\paragraph{Loss transformation}
Losses are transformed in the style of OCE risk functions.\footnote{See \citet{bental2007a} for an accessible introduction.} Letting $\phi: \RR \to \RR$ denote a function which is convex, non-decreasing, and normalized such that $\phi(0)=0$ and $1 \in \partial\phi(0)$, we define the $\phi$-transformation of base loss $\ell(h;\rdv{Z})$ into $\rdv{L}_{\phi}(h)$ as
\begin{align}\label{eqn:oce_transform}
\rdv{L}_{\phi}(h) \defeq \theta^{\ast} + \phi\left(\ell(h;\rdv{Z}) - \theta^{\ast}\right)
\end{align}
where the degree of shift $\theta^{\ast}$ can be fixed before seeing any data, or if we want to follow the OCE risk style strictly, $\theta^{\ast}$ can be determined based on data as
\begin{align}\label{eqn:oce_internal}
\theta^{\ast} \in \argmin_{\theta \in \RR} \left[ \theta + \exx\left[\phi\left(\ell(h;\rdv{Z}) - \theta\right)\right] \right].
\end{align}
Setting $\phi(u) = u$ recovers the base loss, i.e., $\rdv{L}_{\phi}(h) = \ell(h;\rdv{Z})$, but choices such as $\phi(u) = \max\{0,u\}/(1-\beta)$ with $0 < \beta < 1$ (conditional value-at-risk, CVaR) and $\phi(u) = (\mathrm{e}^{\gamma u}-1)/\gamma$ with $\gamma > 0$ ($\gamma$-tilted risk) make it possible to put more weight on large but infrequent losses. The three examples of $\phi$ just mentioned are shown in the left-hand plot of Figure \ref{fig:phi_rho_examples}.

\paragraph{Penalizing poor concentration}
In the previous literature on learning with OCE risks, the final objective function is simply the expected value of the transformed losses, namely $\exx[\rdv{L}_{\phi}(h)]$.\footnote{See for example \citet{lee2020a,curi2020a,holland2021c,li2023a}.} In contrast, we ask that $\rdv{L}_{\phi}(h)$ be well-concentrated around a fixed threshold $\eta \in \RR$, where ``well-concentrated'' means ``small dispersion as quantified by $\rho:\RR \to [0,\infty)$.'' Since $\rho$ is used to measure dispersion, we ask that $\rho(0)=0$ and that $\rho(\abs{\cdot})$ be non-decreasing; note that we do \emph{not} ask that $\rho(\cdot)$ itself be monotonic. In the right-hand plot of Figure \ref{fig:phi_rho_examples}, we give three typical examples of symmetric $\rho$ (namely $\rho(u) = u^{2}/2$, $\sqrt{u^{2}+1}-1$, and $\abs{u}$).\footnote{The question of when and why to use asymmetric or non-convex $\rho$ is an important and interesting question, but it goes beyond the scope of this paper.} When $\rho$ is differentiable, the properties assumed imply that near the origin, $\rho$ has negative slope below zero and positive slope above zero. This results in an ``ascent-descent'' property that arises when we consider gradient-based optimization methods, since via the chain rule, the derivative of $\rho(\cdot)$ works as a switch between gradient descent and \emph{ascent} by changing signs depending on the sign of $\rdv{L}_{\phi}(h) - \eta$. See \S{\ref{sec:sharpness_links}} for more formal details, and \S{\ref{sec:coce_literature}} for more bibliographic background.

\subsection{Practical implementation}\label{sec:coce_practical}

From a conceptual viewpoint, the key features of our proposed method are well-described by the previous two paragraphs culminating in the objective function $\COCE(\cdot)$ given in (\ref{eqn:concentrated_OCE}). In practice, however, we will not have access to the true data distribution, and thus the expectation over $\rdv{Z}$ will be replaced by the empirical mean taken over a sample $\rdv{Z}_{1},\ldots,\rdv{Z}_{n}$. In addition, the question of how to deal with the internal optimization of $\theta^{\ast}$ in (\ref{eqn:oce_internal}) still needs to be addressed. We discuss three natural strategies below, assuming an implementation using a framework with automatic differentiation such as PyTorch.

\paragraph{Strategy 1 (solve for $\theta^{\ast}$ internally)}
If we wish to follow the procedure laid out in \S{\ref{sec:coce_intro}} in a precise fashion, we can solve for $\theta^{\ast}$ during the forward pass. In some special cases of OCE, such as CVaR ($\phi(u) = \max\{0,u\}/(1-\beta)$) and tilted risk ($\phi(u) = (\mathrm{e}^{\gamma u}-1)/\gamma$), $\theta^{\ast}$ has a known solution which can be easily computed based on the data, and whose (sub-)gradient can be obtained on a backward pass made by standard ``autograd'' procedures.\footnote{For $\beta$-level CVaR, setting $\theta^{\ast}$ to be a $\beta$-level quantile (e.g., via \texttt{torch.quantile}) solves (\ref{eqn:oce_internal}). For $\gamma$-tilted risk, the unique solution is $\theta^{\ast} = \log(\exx[\exp(\gamma \ell(h;\rdv{Z}))])/\gamma$.}

\paragraph{Strategy 2 (optimize $h$ and $\theta$ in parallel)}
Diverging slightly from the procedure in \S{\ref{sec:coce_intro}}, a simple alternative approach is to simply optimize with respect to $h$ and $\theta$ in a joint fashion. That is, denoting $\rdv{L}_{\phi}(h;\theta) \defeq \theta + \phi\left(\ell(h;\rdv{Z}) - \theta\right)$, one could optimize $\exx\left[\rho\left(\rdv{L}_{\phi}(h;\theta) - \eta\right)\right]$ as a function of $(h,\theta)$. While this approach does not guarantee the transformations are strictly of the OCE form, it is easy to implement and works for any choice of $\phi$, particularly when we do not have a closed-form solution to (\ref{eqn:oce_internal}), and an iterative approximation to $\theta^{\ast}$ during each forward pass is cost-prohibitive.

\paragraph{Strategy 3 (leave $\theta$ fixed)}
The simplest approach is simply to fix $\theta$ alongside $\phi$ and $\rho$ from the start, and optimize $\exx\left[\rho\left(\rdv{L}_{\phi}(h;\theta) - \eta\right)\right]$ in $h$, recalling our notation for $\rdv{L}_{\phi}(h;\theta)$ given in our description of Strategy 2. While the mean of the transformed losses $\rdv{L}_{\phi}(h;\theta)$ need not align with a known OCE risk, the desired property of placing emphasis on the worst-case examples can be easily preserved; consider for example fixing $\theta = 0$ and using $\phi(u) = (\mathrm{e}^{\gamma u}-1)/\gamma$ with a positive tilt parameter $\gamma$.

\bigskip

In our empirical investigations to be described shortly in \S{\ref{sec:empirical}}, we test out all the aforementioned strategies, though our main findings center around the empirical performance of the simplest strategy 3. In our public code repository, we include representative implementations of all the above strategies for interested users.

\subsection{Links to sharpness-aware learning}\label{sec:sharpness_links}

There is a large literature on learning algorithms designed to seek solutions around which the surface of the empirical objective function is ``flat,'' a well-known heuristic dating back at least to \citet{hochreiter1994a,hochreiter1997a}. Research in this direction remains active, with first-order methods such as sharpness-aware minimization \citep{foret2021a} showing particularly strong performance for the training of large neural network models. Here we show how our proposed COCE method described in \S{\ref{sec:coce_intro}} can work such that a first-order sharpness penalty is engaged.

Consider implementing the proposed procedure using stochastic gradient descent over the training set. That is, at each step $t$, a random point $\rdv{Z}_{t}$ is sampled from the same distribution as the test point $\rdv{Z}$, and the following update is run:
\begin{align}\label{eqn:sgd_update}
h_{t+1} = h_{t} - \alpha_{t} \rdv{G}_{t}(h_{t}).
\end{align}
Here $\alpha_{t}$ is a non-negative step size, and $\rdv{G}_{t}(h)$ denotes a stochastic (sub-)gradient whose expected value falls into the sub-differential of $\COCE(h)$ in (\ref{eqn:concentrated_OCE}); to keep things concrete, we will write $\rdv{G}_{t}(h) \defeq \rho^{\prime}(\rdv{L}_{\phi,t}(h)-\eta)\nabla\rdv{L}_{\phi,t}(h)$ for each $h$. Unpacking this notation a bit further, $\rho^{\prime}$ is the first derivative of $\rho$, $\eta \in \RR$ is the concentration threshold used in (\ref{eqn:concentrated_OCE}), $\rdv{L}_{\phi,t}(\cdot)$ denotes $\rdv{L}_{\phi}(\cdot)$ in (\ref{eqn:oce_transform}) with $\rdv{Z}$ replaced by $\rdv{Z}_{t}$, and finally $\nabla\rdv{L}_{\phi,t}(h)$ is the stochastic (sub-)gradient of $\rdv{L}_{\phi,t}(\cdot)$ evaluated at $h$.\footnote{In the update (\ref{eqn:sgd_update}) we gloss over technicalities such as sub-differentiability for simplicity of exposition. We are also clearly assuming that the elements $h_{1},h_{2},\ldots$ live in the same space as the sampled stochastic (sub-)gradients; this is all consistent with our exposition of SharpDRO in \S{\ref{sec:sdro}}.} With this notation in place, we give some simple sufficient conditions to induce a sharpness-reducing effect for our procedure (see \S{\ref{sec:additional_proofs}} for a proof).
\begin{prop}[COCE-SGD crossing the threshold]\label{prop:cross_threshold}
With a sequence $(h_{1},h_{2},\ldots)$ generated by the update (\ref{eqn:sgd_update}), assume that for some $t \geq 1$, the transformed loss crosses the $\eta$ threshold in a single step from below; more precisely, assume that the pair $(h_{t},h_{t+1})$ satisfies $\rdv{L}_{\phi,t}(h_{t}) < \eta < \rdv{L}_{\phi,t+t}(h_{t+1})$. Furthermore, set the corresponding pair of step sizes such that $\alpha_{t} = \alpha_{0} / \abs{\rho^{\prime}(\rdv{L}_{\phi,t}(h_{t}) - \eta)}$ and $\alpha_{t+1} = \alpha_{0} / \abs{\rho^{\prime}(\rdv{L}_{\phi,t+1}(h_{t+1}) - \eta)}$, where $\alpha_{0} > 0$ can be set freely. Running one more update to get $h_{t+2}$, we obtain the relation
\begin{align}\label{eqn:cross_threshold}
h_{t+2} = h_{t} - \alpha_{0}^{2}\left[\frac{\nabla\rdv{L}_{\phi,t+1}(h_{t}+\alpha_{0}\nabla\rdv{L}_{\phi,t}(h_{t})) - \nabla\rdv{L}_{\phi,t}(h_{t})}{\alpha_{0}}\right].
\end{align}
\end{prop}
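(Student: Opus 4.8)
The plan is to track the two SGD updates explicitly and observe that the prescribed step sizes, combined with the sign behavior of $\rho^{\prime}$ on either side of the threshold, turn the first step into a pure ascent and the second into a pure descent, both of fixed magnitude $\alpha_{0}$; composing them and rearranging then yields the finite-difference expression. No analytic machinery is needed beyond the chain-rule form $\rdv{G}_{t}(h) = \rho^{\prime}(\rdv{L}_{\phi,t}(h)-\eta)\nabla\rdv{L}_{\phi,t}(h)$ already fixed in the text.

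First I would compute $h_{t+1}$. The crossing assumption gives $\rdv{L}_{\phi,t}(h_{t}) - \eta < 0$, so the assumed properties of $\rho$ (namely $\rho(0)=0$ with $\rho(\abs{\cdot})$ non-decreasing, which give negative slope below the origin) force $\rho^{\prime}(\rdv{L}_{\phi,t}(h_{t}) - \eta) < 0$, hence $\rho^{\prime}(\rdv{L}_{\phi,t}(h_{t}) - \eta) = -\abs{\rho^{\prime}(\rdv{L}_{\phi,t}(h_{t}) - \eta)}$. Substituting the prescribed $\alpha_{t} = \alpha_{0}/\abs{\rho^{\prime}(\rdv{L}_{\phi,t}(h_{t}) - \eta)}$ into $h_{t+1} = h_{t} - \alpha_{t}\rho^{\prime}(\rdv{L}_{\phi,t}(h_{t}) - \eta)\nabla\rdv{L}_{\phi,t}(h_{t})$, the magnitude of $\rho^{\prime}$ cancels and the sign flips, leaving the ascent step $h_{t+1} = h_{t} + \alpha_{0}\nabla\rdv{L}_{\phi,t}(h_{t})$.

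Next I would compute $h_{t+2}$ from $h_{t+1}$ in the same manner. Now the crossing assumption gives $\rdv{L}_{\phi,t+1}(h_{t+1}) - \eta > 0$, so the positive-slope-above-the-origin property gives $\rho^{\prime}(\rdv{L}_{\phi,t+1}(h_{t+1}) - \eta) > 0$, and the corresponding $\alpha_{t+1} = \alpha_{0}/\abs{\rho^{\prime}(\rdv{L}_{\phi,t+1}(h_{t+1}) - \eta)}$ collapses the update to the descent step $h_{t+2} = h_{t+1} - \alpha_{0}\nabla\rdv{L}_{\phi,t+1}(h_{t+1})$. Substituting $h_{t+1} = h_{t} + \alpha_{0}\nabla\rdv{L}_{\phi,t}(h_{t})$ into both the base point and the gradient argument yields $h_{t+2} = h_{t} + \alpha_{0}\nabla\rdv{L}_{\phi,t}(h_{t}) - \alpha_{0}\nabla\rdv{L}_{\phi,t+1}(h_{t} + \alpha_{0}\nabla\rdv{L}_{\phi,t}(h_{t}))$, which is exactly (\ref{eqn:cross_threshold}) after rewriting the common coefficient $\alpha_{0}$ as $\alpha_{0}^{2}/\alpha_{0}$ and grouping the two gradient terms into the divided difference shown in the bracket.

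There is no serious obstacle here; the argument is pure bookkeeping of signs and step sizes. The one point requiring care is well-definedness: the prescribed step sizes divide by $\abs{\rho^{\prime}}$ evaluated at the two iterates, so I would note that the \emph{strict} inequalities in the crossing hypothesis keep both arguments of $\rho^{\prime}$ away from the origin, which together with $\rho^{\prime}$ being nonzero off the origin (as holds for each example $\rho$ in Figure \ref{fig:phi_rho_examples}) guarantees that $\alpha_{t}$ and $\alpha_{t+1}$ are finite and positive. I would also emphasize that the resulting divided-difference form is precisely what exposes the sharpness-reducing interpretation: a single cycle first perturbs $h_{t}$ along an ascent direction and then measures the change in the loss gradient at that perturbed point, i.e., a finite-difference probe of curvature in the spirit of SAM.
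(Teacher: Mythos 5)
Your proposal is correct and follows essentially the same route as the paper's proof: unfold the two SGD updates, use the crossing hypothesis together with the sign of $\rho^{\prime}$ on either side of the origin to cancel the $\abs{\rho^{\prime}(\cdot)}$ factors in the prescribed step sizes, reducing the pair of updates to an ascent step and a descent step of magnitude $\alpha_{0}$, and then regroup into the divided-difference form. Your added remark on well-definedness (the strict inequalities keeping the arguments of $\rho^{\prime}$ away from the origin so that the step sizes are finite) is a small point the paper leaves implicit, but it does not change the substance of the argument.
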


Inspecting the relation (\ref{eqn:cross_threshold}) obtained in Proposition \ref{prop:cross_threshold}, first note that in the numerator of the update direction vector, we have $\nabla\rdv{L}_{\phi,t+1}(h_{t}+\alpha_{0}\nabla\rdv{L}_{\phi,t}(h_{t}))$; this is almost identical to the update direction used in the SAM algorithm of \citet{foret2021a} using mini-batch of size 1, with the only difference being that they use $\nabla\rdv{L}_{\phi,t}(h_{t}+\alpha_{0}\nabla\rdv{L}_{\phi,t}(h_{t}))$, i.e., the leading gradient is $\nabla\rdv{L}_{\phi,t}$, not $\nabla\rdv{L}_{\phi,t+1}$. In the numerator of our (\ref{eqn:cross_threshold}), note that we are taking the difference of $\nabla\rdv{L}_{\phi,t+1}$ (at a perturbed location) and the original $\nabla\rdv{L}_{\phi,t}$. As such, sharpness with respect to both changes in parameter \emph{and} data are being captured here. To make this relation a bit more formal, first define the scaled data deviation $\rdv{D}_{t} \defeq (\rdv{Z}_{t+1} - \rdv{Z}_{t})/\alpha_{0}$, such that we trivially have $\rdv{Z}_{t+1} = \rdv{Z}_{t} + \alpha_{0}\rdv{D}_{t}$. Noting that $\alpha_{0}$ appears in the numerator and denominator, this update direction can be seen as a (first-order) finite-difference approximation of the \emph{gradient of the squared gradient norm}, i.e., the gradient of $\Abs{\nabla\rdv{L}_{\phi,t}(h_{t})}^{2}$, taken with respect to both parameters and data. That is, $h_{t}$ is shifted by $\alpha_{0}\nabla\rdv{L}_{\phi,t}(h_{t})$ and $\rdv{Z}_{t}$ is shifted by $\alpha_{0}\rdv{D}_{t}$, though only the former shift has the SAM-like maximal property (within radius $\alpha_{0}$). These links to sharpness can be considered extensions beyond known connections between the ``Flooding'' algorithm and SAM found by \citet[\S{5.1}]{karakida2023a}, noting that the nature of our COCE-SGD algorithm (qualitatively distinct from Flooding) used in Proposition \ref{prop:cross_threshold} is what lets us capture sharpness with respect to both parameters \emph{and} data.

\section{Empirical tests}\label{sec:empirical}

As we described in \S{\ref{sec:intro}}, our basic motivation for this work is to obtain a more practical alternative to SharpDRO that is state-agnostic, does not use the double-gradient sub-routine of SAM, and performs well under smaller models with less representative power than those studied in the original paper of \citet{huang2023a}. We thus start by re-creating the original state-aware SharpDRO experiments, run the same tests using our state-agnostic COCE method, swapping the large wide ResNet they use with a small CNN model in order to diversify the relative difficulty of ``noisy'' data points. In addition, we go beyond the context of SharpDRO (where $\ddist = \ddist^{\ast}$), to consider a state-agnostic learning task where drift occurs, i.e., where $\ddist \neq \ddist^{\ast}$, and it is only the test data that is corrupted; this is the original use case intended for the robustness benchmarks of \citet{hendrycks2019a}, and on these benchmarks, since there are no severity levels for SharpDRO to use, we simply compare COCE with SAM \citep{foret2021a} instead. In all tests, we also include the special case of COCE with $\phi(u) = u$, referred to as ``soft ascent-descent'' (SoftAD). Experimental setup is described in \S{\ref{sec:empirical_setup}}, and evidence for our key findings is presented in \S{\ref{sec:empirical_findings}} with discussion.

\subsection{Setup}\label{sec:empirical_setup}

\paragraph{Software and hardware}
All our experiments use PyTorch, and are run on three different machines each using a single-GPU implementation.\footnote{\url{https://pytorch.org/}} No parallelization is done. Two machines have an NVIDIA A100 (80GB), and the other machine uses an NVIDIA RTX 6000 Ada. For handling the recording and retrieval of different performance metrics, we use the MLflow library.\footnote{\url{https://mlflow.org/}} All the figures and results presented in this paper can be re-created using source code and Jupyter notebooks that we have made public on GitHub.\footnote{\url{https://github.com/feedbackward/addro}}

\paragraph{Data}
We make use of three well-known clean benchmark datasets: CIFAR-10, CIFAR-100, and ImageNet-30.\footnote{We acquire CIFAR-10/100 via PyTorch (using \texttt{torchvision.datasets.cifar}), and ImageNet-30 via the GitHub repository of \citet{hendrycks2019b} (\url{https://github.com/hendrycks/ss-ood}).} For each dataset, the pre-specified training-test splits are maintained throughout. As for the additional training-validation split, we follow the public code from \citet{huang2023a}, with $79.9\%$ training for CIFAR-10/100 and $76.6\%$ for ImageNet-30. We call these datasets ``clean'' because following the SharpDRO original experimental design, corrupted variants of each of these datasets will be used. Regarding the method of generating corrupted images, the original experiments of \citet{huang2023a} use the noise-generating mechanism of \citet{hendrycks2019a}, with a discrete corruption severity level $\rdv{S}$ sampled independently for each image from a truncated Poisson distribution. More precisely, say $\rdv{S}_{\textup{Pois}}$ is Poisson with shape parameter $1$, i.e., $\prr{\{\rdv{S}_{\textup{Pois}} = s\}} = \mathrm{e}^{-1}/s!$ for each $s \in \{0,1,2,\ldots\}$. Then, since a total of six severity levels (including no corruption) were used in the original tests, the random severity level used can be written as $\rdv{S} = \min\{\rdv{S}_{\textup{Pois}}, 5\}$.\footnote{Under the Poisson setup, $\prr{\{\rdv{S}_{\textup{Pois}} \leq 5\}} \geq 0.999$, so less than one in a thousand samples is truncated.} When $\rdv{S} = 0$, this corresponds to zero corruption, where clean images are used. In all cases, we use the Gaussian noise corruption mechanism.

\paragraph{Models}
We consider two different model classes for image classification. First is a wide ResNet (WRN-28-2) as implemented in the SharpDRO GitHub repository.\footnote{We use the \texttt{WideResNet} class implemented in \texttt{models/resnet.py} as-is. For the original WRN paper, see \citet{zagoruyko2016arxiv}.} Our main interest, however, is in whether or not the robustness properties observed for SharpDRO holds for simpler, less expressive models. To investigate this, we also run tests for a simple convolutional neural network (CNN). Our CNN has two convolutional layers, both with pooling, followed by two hidden fully connected layers. Rectified linear unit (ReLU) activations are used after each of the 2D convolutions and hidden linear layers.

\paragraph{Optimization}
For all tests comparing COCE and SharpDRO, we use optimizer settings which follow the original tests of \citet{huang2023a} precisely. In more detail, we run SGD (\texttt{torch.optim.sgd}) for $200$ epochs with $0.9$ momentum throughout, and an initial learning rate of $0.03$ that is discounted by a multiplicative factor of $0.2$ three times over the course of training, after $30\%$, $60\%$, and $80\%$ progress.\footnote{The learning rate scheduler is included in the SharpDRO GitHub repository (\texttt{step\_lr.py}), but it is not mentioned anywhere in the original paper; our experiments follow the authors' public code.} For all tests, a fixed mini-batch size of $100$ is used. As for our distribution drift experiments comparing COCE with SAM, the above settings are inherited, i.e., the SAM update is wrapped around the SGD optimizer just described. It should also be noted that in the original SharpDRO tests there is a parameter called ``\texttt{step\_size}'' which refers to $\beta_{t}$ from (\ref{eqn:sdro_adv_probs}) in our notation, and this is fixed at $0.01$ for all steps. In all settings, the base loss function used for training is the cross-entropy loss.

\paragraph{Hyperparameters}
Each of the key methods being studied here has a hyperparameter, and for each method we run tests for a range of hyperparameter values. SharpDRO has a radius parameter, given by $\varepsilon$ in (\ref{eqn:sdro_second_grad}), and SAM also has an analogous radius parameter, denoted by ``$\rho$'' in the original paper. Finally, COCE has the key threshold parameter $\eta$ in (\ref{eqn:concentrated_OCE}). Candidate values for hyperparameters are taken from $\{0.01, 0.02, 0.05, 0.1, 0.2, 0.5\}$ for SharpDRO, COCE, and SAM.

\paragraph{Implementation of COCE}
Our proposed method is centered around the objective function $\COCE(h)$ defined in (\ref{eqn:concentrated_OCE}). For the $\phi$-modified losses, we do a simple exponential transformation, with $\rdv{L}_{\phi}(h) = \exp(\gamma\ell(h;\rdv{Z}))$, with $\gamma = 0.1$ fixed throughout. This amounts to a concrete example of ``strategy 3'' discussed earlier in \S{\ref{sec:coce_practical}}, with $\theta = 0$ fixed. We have not experimented with any other choices of $\gamma$ and $\theta$. Note that this exponential transformation is similar to that used in ``tilted ERM'' \citep{lee2020a,li2021a}, a classic case of OCE risk, but the critical difference here is the method of aggregation; instead of re-scaling with $1/\gamma$, averaging, and taking the log, we pass the raw $\phi$-modified losses directly to our dispersion quantifier $\rho$ to penalize concentration. We fix $\rho$ as the smooth pseudo-Huber function $\rho(u) = \sqrt{u^{2}+1}-1$ throughout all tests. There are countless other functions which have similar key properties, such as being approximately quadratic near zero and linear in the limit; our choice is motivated by simplicity. In all cases, expectation $\exx[\cdot]$ in our definition of COCE is with respect to the empirical training data.

\paragraph{Experiment categories}
There are two main categories of experiments that we run. The first is aligned with the original SharpDRO tests of \citet{huang2023a}, where there is no distribution drift, though we run 10 independent trials instead of three. We run vanilla ERM using the optimizer described above, SharpDRO, and COCE for this first category. Only SharpDRO gets access to samples of $\rdv{S}$ at training time. For the second category, we also run 10 independent trials, but now $\ddist \neq \ddist^{\ast}$, with training data being clean, and only test data being corrupted. In this setting, state-aware SharpDRO is replaced with state-agnostic SAM, and only Poisson-corrupted test data is used. Distinct seeds are assigned to each of the random trials for both experiment categories, and so across trials, we get different initial model parameters and different training-validation splits due to random shuffling. Seeds are common across methods (ERM, COCE, SAM, SharpDRO, SoftAD), so every method gets to see the same data for any given trial.

\subsection{Main findings}\label{sec:empirical_findings}

Here we describe our main findings, with figures that highlight the main trends observed in our experiments.

\paragraph{COCE enjoys superior balanced accuracy under small model}
As stated in \S{\ref{sec:background}}, our ultimate evaluation metric of interest is the balanced accuracy, namely $1-\GBL(h)$ under the 0-1 loss $\ell = \ell_{\textup{01}}$, given in (\ref{eqn:GBL_zeroone}). In Figure \ref{fig:balacchyp_cnn_im30}, we show how the test balanced accuracy (larger is better) compares across different hyperparameter settings at different stages of training (after 50th epoch, 150th epoch, and the last epoch). Balanced accuracy is displayed using error bars, representing the mean $\pm$ standard deviation (over trials). Note that the horizontal black line denotes the balanced accuracy achieved by ERM, which has no hyperparameters. As hypothesized, naively putting too much weight on the worst case under a simple CNN is clearly detrimental to the performance of SharpDRO, whereas for the settings used here, our COCE shows strong, robust performance. This is significant because while SharpDRO does not attempt to optimize balanced accuracy directly, it is still an ``oracle'' in the sense that it is given direct access to the severity level $\rdv{S}$ associated with each observation $(\rdv{X},\rdv{Y})$, and uses observations of $\rdv{S}$ to do the per-state re-weighting used in (\ref{eqn:sdro_second_grad}) from \S{\ref{sec:sdro}}. In contrast, all the other methods only see $(\rdv{X},\rdv{Y})$. This trend holds across all datasets tested.

\begin{figure}[t!]
\centering
\includegraphics[width=1.0\textwidth]{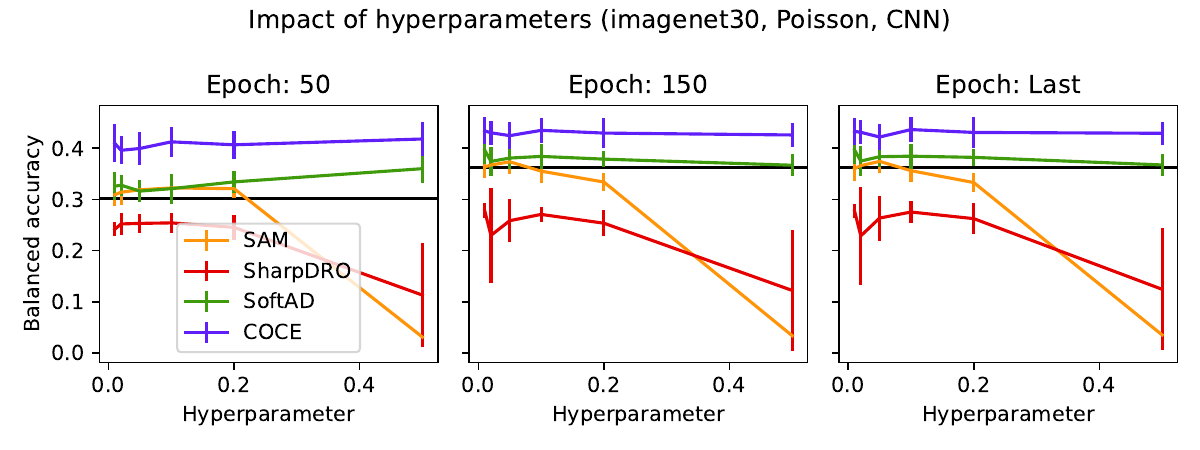}
\caption{Balanced accuracy for different hyperparameters under a simple CNN model.}
\label{fig:balacchyp_cnn_im30}
\end{figure}

\paragraph{COCE displays unique performance under auxiliary metrics}
While balanced accuracy is the main metric of interest here, other evaluation metrics are naturally of interest, in particular the performance \emph{on average}, instead of state-balanced performance. In Figure \ref{fig:mainexp_im30_compare}, we show how each method performs in three auxiliary metrics as training progresses. The top row shows the average (cross-entropy) loss. The middle row shows the usual classification accuracy, in contrast to the state-balanced accuracy in Figure \ref{fig:balacchyp_cnn_im30}. The bottom row shows the usual L2 norm of model parameters. Model selection is done per-trial, based on the highest average accuracy on validation data; the plotted values are averaged over trials. Note that under the CNN model, COCE shows superior performance \emph{on average}, as well, indicating an appealing alternative to ERM and SAM. Furthermore, for both the CNN and WRN models, COCE shows a much stronger model regularization effect, leading to a much smaller norm; these trends hold across all the datasets used.

\begin{figure}[t!]
\centering
\includegraphics[width=0.5\textwidth]{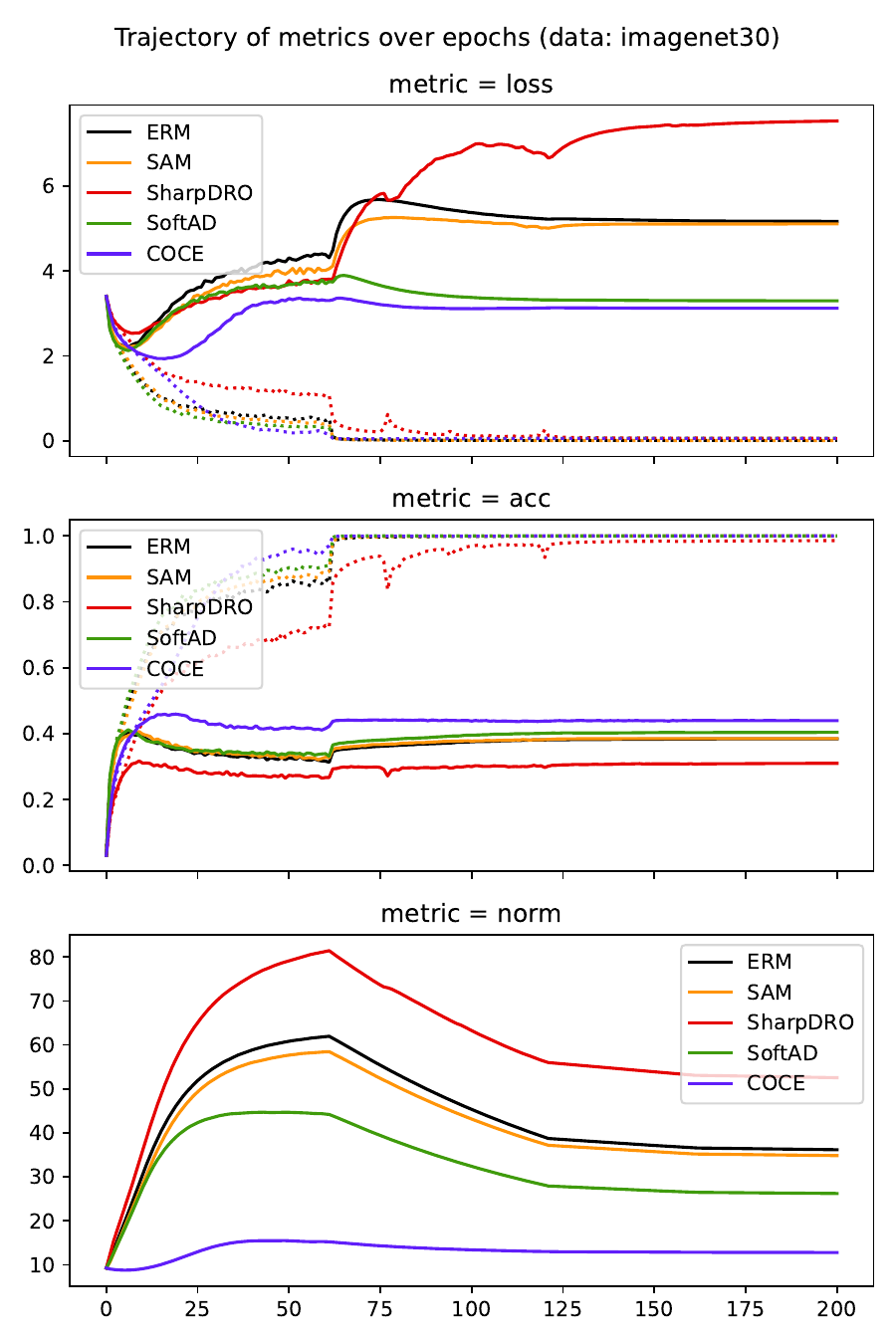}\includegraphics[width=0.5\textwidth]{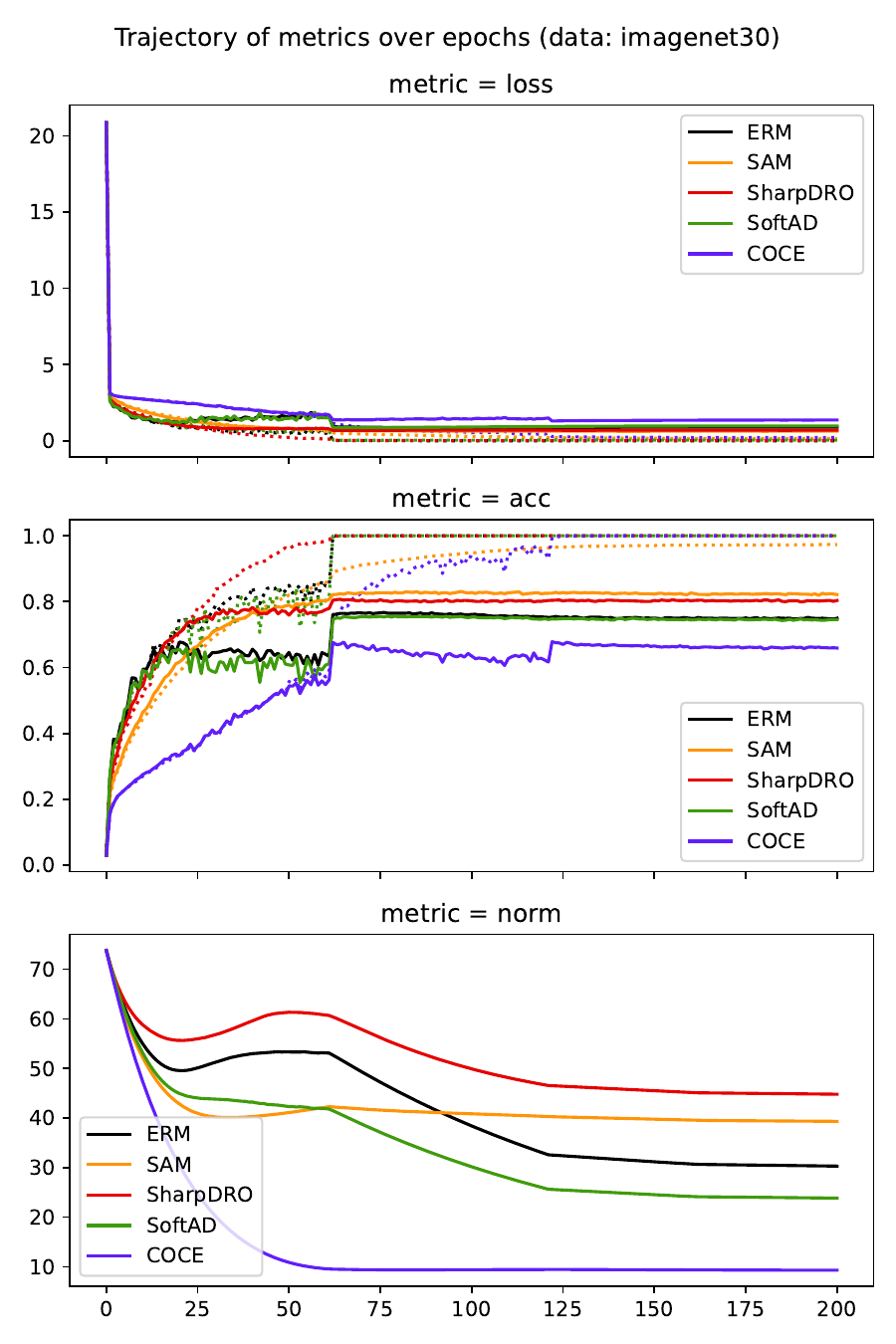}
\caption{Average loss, accuracy, and model norm trajectories (left: CNN, right: WRN). Dashed lines are based on training data, solid lines are based on test data.}
\label{fig:mainexp_im30_compare}
\end{figure}

\paragraph{Limitations under overparameterized models}
Comparing the left side (CNN model) and right side (WRN model) of Figure \ref{fig:mainexp_im30_compare}, we see how the positions of SharpDRO and COCE flip in terms of accuracy on average. Strong performance for SharpDRO in the WRN case was already established by \citet{huang2023a}, and we have naively inherited their hyperparameter range and optimizer settings for COCE, so the strength of SharpDRO here is not surprising. With a broader range of COCE settings ($\eta$ and $\theta$), it is possible to achieve much better performance, but establishing which settings to use under which model goes beyond the scope of this paper.

\paragraph{Robustness under distribution shift}
Finally, we diverge from the SharpDRO experimental setup completely, and instead return to the original intended use of the corrupted ImageNet and CIFAR-10/100 datasets \citep{hendrycks2019a}, which is evaluation, not training. We use the original clean datasets for training/validation, and use the corrupted datasets for testing only. As mentioned earlier, since the state $\rdv{S}$ is irrelevant for the training data, state-aware SharpDRO is not applicable, so instead we compare COCE with SAM. In Figure \ref{fig:balacchyp_cnn_cifar10_drift}, we show results analogous to those in Figure \ref{fig:balacchyp_cnn_im30}, except now for the CIFAR-10 dataset under distribution drift; the same trends hold for other datasets as well. At the very least, we see that under a simple CNN model, COCE provides a very appealing low-cost alternative to SAM that does well under distribution shift.

\begin{figure}[t!]
\centering
\includegraphics[width=1.0\textwidth]{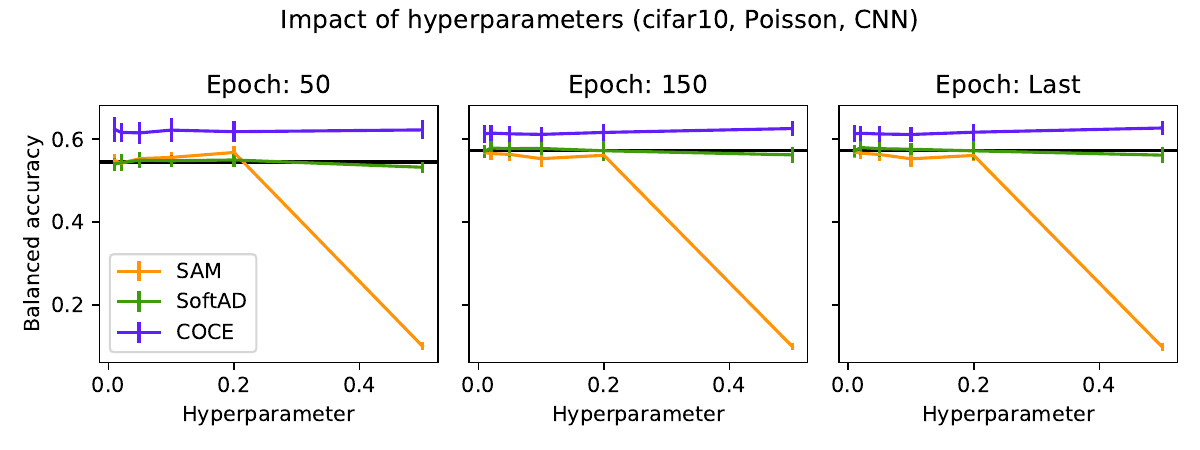}
\caption{Performance of state-agnostic methods on CIFAR-10 under distribution drift.}
\label{fig:balacchyp_cnn_cifar10_drift}
\end{figure}

\section{Conclusion}\label{sec:conclusion}

In this work, we investigated how the state-aware SharpDRO procedure of \citet{huang2023a} performs outside the original scenario of ``over-parameterized model, difficult points are present in both training/testing.'' We saw how using a model with less expressive power leads to far less desirable test performance in terms of balanced accuracy, and introduced concentrated OCE (COCE) as a flexible, state-agnostic alternative, which was empirically shown to be a strong practical alternative across datasets. We established formal connections between our procedure and first-order sharpness penalties, and showed empirically how COCE has the potential to be a low-cost alternative to SAM in the context of distribution drift under small models.

Taking these points together, we have initial evidence supporting COCE as a learning algorithm with robust generalization properties. That said, our investigation is still in a nascent stage. From the empirical side, the biggest limitation is that we have naively inherited the settings of the SharpDRO experiments, and have only studied two very simple special cases of COCE (with and without exponential tilting). While initial results are encouraging, the slope of $\phi$ is expected to have a strong impact on performance, and strategies for deciding on $\phi$ and $\eta$ in a joint fashion are an important future direction. As we discussed in \S{\ref{sec:background}}, our formulation of the ``generalized balanced loss'' is very broad, and captures many important problem settings related to class imbalance and fairness. At the same time, our COCE method is not in any way specialized to the image classification problem, and thus ``concentrated'' variants of existing algorithms seeking good balanced loss performance is another appealing direction we intend to pursue.

\section*{Acknowledgments}
This work was supported by JSPS KAKENHI (grant numbers 22H03646 and 23K24902), JST PRESTO (grant number JPMJPR21C6), and a grant from the SECOM Science and Technology Foundation.

\clearpage

\appendix

\section{Bibliographic notes}\label{sec:biblio_notes}

\subsection{Previous work related to the generalized balanced loss}

While we have already described in \S{\ref{sec:formulation_concise}} the particular focus of this paper, set within the context of closely related literature, the generalized balanced loss (\ref{eqn:GBL_defn}) given earlier is flexible enough to capture many important learning problems as special cases. Here we briefly review key categories of such learning problems, while highlighting some representative existing work.

\paragraph{Fairness in machine learning}
Our generalized balanced loss (\ref{eqn:GBL_defn}) can be seen as the expected value of the conditional expected loss $\exx[\ell(h;\rdv{Z}) \cond \rdv{S}]$ (randomness due to $\rdv{S}$) under the assumption that $\rdv{S}$ is uniformly distributed over finite set $\Scal$. This random variable, without the uniform assumption, also plays a key role in the literature on fairness. Many typical definitions of ``fair'' decisions ask that performance should not vary too much across ``sensitive sub-groups.'' For example, random state $\rdv{S}$ can be used to model association with such a sub-group, with perfect fairness amounting to $\exx[\ell(h;\rdv{Z}) \cond \rdv{S} = s] = \exx[\ell(h;\rdv{Z}) \cond \rdv{S} = s^{\prime}]$ for all $s, s^{\prime} \in \Scal$. When losses are bounded below, one natural approach is to tackle the expected loss on the worst-performing sub-group, namely to minimize $\max_{s \in \Scal}\exx[\ell(h;\rdv{Z}) \cond \rdv{S} = s]$ as a function of $h \in \HH$. In the state-aware case, the worst-case group can be directly targeted \citep{sagawa2020a}, but in the state-agnostic case, we cannot directly sample $\rdv{S}$ or $\exx[\ell(h;\rdv{Z}) \cond \rdv{S}]$. In such a setting, one can use alternative criteria such as $\chi^{2}$-DRO (distributionally robust optimization) or conditional value-at-risk (CVaR) based on the original random loss $\ell(h;\rdv{Z})$ without conditioning on $\rdv{S}$ \citep{hashimoto2018a,williamson2019a}, with the caveat that parameters of these criteria need to be set such that the degree of tail sensitivity is sufficiently high. In terms of the literature on justice and fairness outside of machine learning, the worst-case or ``maximin'' approach is closer to the thinking of John Rawls, whereas our balanced objective (\ref{eqn:GBL_defn}) takes us closer to the principle of being an ``impartial observer,'' a notion closely associated with John C.~Harsanyi.\footnote{We note that the idea of a balanced objective (over sensitive sub-groups) similar to our (\ref{eqn:GBL_defn}) is mentioned briefly by \citet[\S{6.1}]{williamson2019a}. For more on the theory of justice and fairness, see for example the following entries in the Stanford Encyclopedia of Philosophy: \url{https://plato.stanford.edu/entries/rawls/} and \url{https://plato.stanford.edu/entries/impartiality/}.} While fairness notions are outside the scope of our evaluations in this paper, the state-agnostic method we describe in \S{\ref{sec:coce}} can easily be applied to such settings. We leave such applications for future work.

\paragraph{Dealing with class imbalance}
In the context of classification under class imbalance, our generalized balanced classification error (\ref{eqn:GBL_zeroone}) includes the well-known ``class-balanced error'' \citep{menon2013a} as an important state-aware special case. More precisely, if the data comes in the form of an (input, label) pair $(\rdv{X},\rdv{Y})$ taking values in $\XX \times \YY$, and our random state is simply understood to be the class label, namely $\rdv{S} = \rdv{Y}$ and $\Scal = \YY$, then under the zero-one loss $\ell = \ell_{\textup{01}}$ we have $\GBL(h) = \CBE(h)$, where the latter is defined by
\begin{align}\label{eqn:CBE_defn}
\CBE(h) \defeq \frac{1}{\abs{\YY}} \sum_{y \in \YY} \prr{\left\{ h(\rdv{X}) \neq \rdv{Y} \cond \rdv{Y} = y \right\}}.
\end{align}
Clearly, the preceding $\CBE(h)$ aims at ensuring the error is good across all classes, regardless of whether or not they are majority or minority classes. While designing a learning algorithm that is consistent in terms of $\GBL(\cdot)$ is not in general feasible under a state-agnostic setting, $\CBE(h)$ depends only on the observable labels, and with label information in hand it is possible to modify typical losses such as the softmax cross-entropy to design a (Fisher) consistent procedure \citep{menon2021a}.

\paragraph{Robustness to distribution drift}
While we have seen in the preceding paragraphs that achieving strong performance in terms of balanced loss across states is a natural goal in and of itself, balanced loss minimization can also be seen as one approach to realize better performance in various metrics under distribution drift (i.e., when $\ddist \neq \ddist^{\ast}$). In particular for image classification, there is a massive literature on designing neural network training procedures which are ``robust'' in the sense that they perform well on test examples with traits that are extremely rare or non-existent in the training data \citep{hendrycks2019a}. Naturally, the presence or degree of these aforementioned ``traits'' (e.g., texture changes, blur, additive random noise, etc.) can be modelled as state $\rdv{S}$, and when traits are easily pinned down and re-created using simple functions (essentially a state-aware scenario), practical tricks such as data augmentation are very effective \citep{hendrycks2021a}. In more realistic settings where data variations between $\ddist$ and $\ddist^{\ast}$ are not easily foreseen, the resulting state-agnostic learning task is much more difficult. When out-of-distribution data is available, the unobserved states can potentially be \emph{estimated} and then leveraged to augment training data or regularize the training process, as done by \citet{zhu2023a}. From our viewpoint, all these efforts can be seen as attempts to modify the training data distribution such that the usual empirical risk objective $\sum_{i=1}^{n}\ell(h;\rdv{Z}_{i})/n$ does not diverge too far from the \emph{trait}-balanced empirical variant of $\GBL(h)$. Our state-agnostic approach described in \S{\ref{sec:coce}} can be considered orthogonal to such existing methods, and in \S{\ref{sec:empirical}} we include tests under distribution drift where $\ddist$ and $\ddist^{\ast}$ do not align.

\subsection{Past work related to COCE}\label{sec:coce_literature}

In our introduction to COCE in \S{\ref{sec:coce}}, one central concept is that of penalizing losses with poor concentration. There is a massive literature related to designing objectives based on the mean and variance of the loss distribution, in particular for implying bounds with ``fast rates'' \citep{maurer2009a}, though in practice, variance is almost always approximated from above using worst-case oriented risks such as OCE or DRO risks \citep{gotoh2018a,duchi2019a}, under the key assumption that losses are bounded below. In contrast, by penalizing dispersion in a direct way, as we do in our objective (\ref{eqn:concentrated_OCE}), the support of the loss distribution need not be bounded from above or below. This direct approach is not common in the literature; the earliest example we can find is \citet{holland2019b}, who considers training classifiers to have margins that are well-concentrated. Subsequent work has formalized and generalized this class of learning criteria \citep{holland2022c,holland2023bdd,holland2024bddflood}. That said, such procedures have been positioned in the existing literature as \emph{alternatives} to OCE-type transforms, and to the best of our knowledge our proposal is the first to consider the utility of combining these two techniques.

Another important notion that arises in \S{\ref{sec:coce}} is that of iterating between gradient ascent and descent. While this is admittedly not yet established as a commonplace technique, it does appear in the machine learning literature in various forms. For example, in the context of seeking ``flat minima,'' finite-difference approximations to sharpness-related quantities are frequently used \citep{karakida2023a}, and typically involve both ascent and descent steps by definition. The sharpness-aware minimization procedure of \citet{foret2021a} is one important example, as is the ``Flooding'' mechanism of \citet{ishida2020a}, which using our notation seeks to optimize $\abs{\exx[\ell(h;\rdv{Z})]-\eta}$. Note the key qualitative difference from $\COCE(h)$; Flooding just asks the loss mean to be close to $\eta$, whereas $\COCE(h)$ accounts for poor concentration around the same threshold.

\section{Theory}

\subsection{Proof details}\label{sec:additional_proofs}

\begin{proof}[Proof of Proposition \ref{prop:cross_threshold}]
First note that immediate from just the definition of the SGD update (\ref{eqn:sgd_update}) and the particular form of $\rdv{G}_{t}$ and $\rdv{G}_{t+1}$, we know that
\begin{align}
\nonumber
h_{t+2} & = h_{t+1} - \alpha_{t+1}\rdv{G}_{t+1}(h_{t+1})\\
\label{eqn:cross_threshold_unfold}
& = h_{t} - \alpha_{t}\rdv{G}_{t}(h_{t}) - \alpha_{t+1}\rdv{G}_{t+1}(h_{t} - \alpha_{t}\rdv{G}_{t}(h_{t}))\\
\nonumber
& = h_{t} - \alpha_{t}\rho^{\prime}(\rdv{L}_{\phi,t}(h_{t})-\eta)\nabla\rdv{L}_{\phi,t}(h_{t})\\
\label{eqn:cross_threshold_scaling}
& \hspace{0.9cm} - \alpha_{t+1}\rho^{\prime}(\rdv{L}_{\phi,t+1}(h_{t+1})-\eta)\nabla\rdv{L}_{\phi,t+1}(h_{t} - \alpha_{t}\rho^{\prime}(\rdv{L}_{\phi,t}(h_{t})-\eta)\nabla\rdv{L}_{\phi,t}(h_{t})).
\end{align}
To clean up the right-hand side of (\ref{eqn:cross_threshold_scaling}), recall our assumption about how $\alpha_{t}$ and $\alpha_{t+1}$ are to be set, with $\alpha_{0}$ re-scaled by the absolute value of the $\rho^{\prime}(\cdot)$ factors. This lets us cancel out the $\rho^{\prime}(\cdot)$ terms, replacing them with factors that depend only on the sign of the loss dispersion. More precisely, writing $r_{t} \defeq \sign(\rdv{L}_{\phi,t}(h_{t+1})-\eta)$ and $r_{t+1} \defeq \sign(\rdv{L}_{\phi,t+1}(h_{t+1})-\eta)$, we have
\begin{align}
\label{eqn:cross_threshold_general}
h_{t+2} = h_{t} - \alpha_{0} \left[ r_{t}\nabla\rdv{L}_{\phi,t}(h_{t}) + r_{t+1}\nabla\rdv{L}_{\phi,t+1}(h_{t} - \alpha_{0}r_{t}\nabla\rdv{L}_{\phi,t}(h_{t})) \right].
\end{align}
At this point, note that the assumption made of
\begin{align*}
\rdv{L}_{\phi,t}(h_{t}) < \eta < \rdv{L}_{\phi,t+t}(h_{t+1})
\end{align*}
is equivalent to $r_{t} = -1$ and $r_{t+1} = 1$ holding together. Plugging this in to (\ref{eqn:cross_threshold_general}), we have
\begin{align*}
h_{t+2} = h_{t} - \alpha_{0} \left[ \nabla\rdv{L}_{\phi,t+1}(h_{t} + \alpha_{0}\nabla\rdv{L}_{\phi,t}(h_{t})) - \nabla\rdv{L}_{\phi,t}(h_{t}) \right],
\end{align*}
from which the desired result follows trivially.
\end{proof}

\subsection{Conceptual proposal of SharpDRO}\label{sec:sdro_more_details}

We say (\ref{eqn:sdro_conceptual}) and (\ref{eqn:sdro_worst_case}) characterize the ``conceptual proposal'' of \citet{huang2023a} based on the theoretical exposition in their paper. Their notation is somewhat inconsistent and contains undefined terms (e.g., ``$\mathcal{L}(\theta;x,y)$'' is defined but ``$\mathcal{L}(\theta,\omega;x,y)$'' is not), but citing the following points as evidence, we are confident our interpretation is valid.
\begin{itemize}
\item From their \S{3.1}, they say that ``instead of directly applying sharpness minimization on the whole dataset,'' they choose to ``focus on sharpness minimization over the worst-case distribution.''
\item As an immediate follow-up to the previous point, in the state-aware case (they refer to this as the ``distribution-aware'' case in \S{3.2.1}), their definition of the ``worst-case distribution'' is said to be found by identifying the ``sub-distribution'' (indexed by $s \in \Scal$ in our notation) which ``yields the largest loss.'' The term ``largest loss'' is vague, since losses are defined per-point, but after their equation (10), they say the first term of their (10) ``simply recovers the learning target of GroupDRO,'' meaning we can take the ``worst-case distribution'' to mean the choice of $s \in \Scal$ which yields the largest (conditional) expected loss, just as we have specified in (\ref{eqn:sdro_worst_case}).
\item The objective they write in their equation (10) contains two terms, one with ``$\mathcal{L}$'' (GroupDRO objective) and one with ``$\mathcal{R}$'' (sharpness regularizer), but given the definition of the latter in their equation (9), losses evaluated at the current parameter (their ``$\theta$'') should cancel out, leaving just the conditional worst-case expected loss evaluated at the perturbed position.
\item As a separate point, the objective as stated in their (5) (denoted by ``$\mathcal{L}_{\textup{SharpDRO}}$'') also implies the same conclusion, plugging in their definition (9) of ``$\mathcal{R}$''.
\end{itemize}
Taken together, these points suggest strongly that the conceptual objective function intended by the authors is as we have described in (\ref{eqn:sdro_conceptual}) and (\ref{eqn:sdro_worst_case}). We provide further details related to their actual implementation in \S{\ref{sec:sdro_implementation}} to follow.

\subsection{Implementation of SharpDRO}\label{sec:sdro_implementation}

To justify our exposition in \S{\ref{sec:sdro}} of how SharpDRO is implemented by \citet{huang2023a}, we provide a concise summary of where in their source code the key elements can be confirmed. At the time of writing, the most recent commit to their GitHub repository (\url{https://github.com/zhuohuangai/SharpDRO}) is \texttt{36eefa2} (March 27, 2023), and all our comments are based on this version.
\begin{itemize}
\item To begin, from \texttt{main.py} lines 131--132, it is clear that the training procedure is done by running the \texttt{train} method of an instance of the \texttt{Trainer} class defined in \texttt{train.py}.

\item Digging in to \texttt{train.py}, from line 156, we see that an instance of \texttt{SAM} is used as the main optimizer, but critically this is not the original SAM, but rather a slightly modified version. This modified version is in their \texttt{sam.py}, and the key additions they make are
\begin{verbatim}
  self.state[p]["old_g"] = p.grad.clone()
\end{verbatim}
within the definition of \texttt{first\_step} (line 76), and
\begin{verbatim}
  p.grad.add_(self.state[p]["old_g"])
\end{verbatim}
within the definition of \texttt{second\_step} (line 88). These two lines taken together in their respective contexts amount to saving the current mini-batch gradient before sharpness-aware perturbation ($\widehat{\rdv{G}}_{t}$ in our notation), and then adding it back after the sharpness-aware gradient has been computed; this add-back is why in (\ref{eqn:sdro_update}) the update direction is the sum $\widehat{\rdv{G}}_{t} + \widetilde{\rdv{G}}_{t}$, and not simply $\widetilde{\rdv{G}}_{t}$. This add-back is mentioned by the authors in \S{B.1} of the supplementary materials to their CVPR paper.

\item Regarding the nature of how $\widetilde{\rdv{G}}_{t}$ is computed, we have already discussed the key details of this in \S{\ref{sec:sdro}}.

\item Given the points established above, the precise form  (\ref{eqn:sdro_update}) follows immediately from line 90 of their \texttt{sam.py} where \texttt{self.base\_optimizer.step()} is run. Note that the ``base optimizer'' is \texttt{torch.optim.SGD} (line 155 of \texttt{train.py}), meaning that after multiplying $\widehat{\rdv{G}}_{t} + \widetilde{\rdv{G}}_{t}$ by the specified learning rate, the result is subtracted from the current parameters.
\end{itemize}
The above implementation of state-aware SharpDRO is clearly very strong in practice, but it diverges from the conceptual proposal of SharpDRO in \S{\ref{sec:sdro}} and \S{\ref{sec:sdro_more_details}}.

\bibliography{../refs/refs}

\end{document}